\tikzset{cross/.style={cross out, draw, 
         minimum size=2*(#1-\pgflinewidth), 
         inner sep=0pt, outer sep=0pt}}
\DeclareMathOperator*{\argmin}{arg\,min}
\newcommand{\state}[0]{x}
\newcommand{\prel}[0]{p_{\rm{rel}}}
\newcommand{\vrel}[0]{v_{\rm{rel}}}
\newcommand{\preldot}[0]{\dot{p}_{\rm{rel}}}
\newcommand{\norm}[1]{\left\lVert#1\right\rVert}
\newcommand{\vertiii}[1]{{\left\vert\kern-0.25ex\left\vert\kern-0.25ex\left\vert #1 
    \right\vert\kern-0.25ex\right\vert\kern-0.25ex\right\vert}}
\newtheorem{theorem}{Theorem}
\newtheorem{definition}{Definition}
\renewcommand{\fps@figure}{htp}
\renewcommand{\fps@table}{htp}
\def\BibTeX{{\rm B\kern-.05em{\sc i\kern-.025em b}\kern-.08em
    T\kern-.1667em\lower.7ex\hbox{E}\kern-.125emX}}
\begin{document}

\title{Control Barrier Functions in Dynamic UAVs for Kinematic Obstacle Avoidance: A Collision Cone Approach}

\author{Manan Tayal$^{1}$, Rajpal Singh$^{2}$, Jishnu Keshavan$^{2}$, Shishir Kolathaya$^{1}$
\thanks{This research is supported by ARTPARK.
}
\thanks{$^{1}$Cyber-Physical Systems, Indian Institute of Science (IISc), Bengaluru.
{\tt\scriptsize \{manantayal, shishirk\}@iisc.ac.in}
.
}%
\thanks{$^{2}$Department of Mechanical Engineering, Indian Institute of Science (IISc), Bengaluru.
{\tt\scriptsize \{rajpalsingh, kjishnu\}@iisc.ac.in}
.
}%
}

\maketitle
\begin{abstract}
Unmanned aerial vehicles (UAVs), specifically quadrotors, have revolutionized various industries with their maneuverability and versatility, but their safe operation in dynamic environments heavily relies on effective collision avoidance techniques. This paper introduces a novel technique for safely navigating a quadrotor along a desired route while avoiding kinematic obstacles. We propose a new constraint formulation that employs control barrier functions (CBFs) and collision cones to ensure that the relative velocity between the quadrotor and the obstacle always avoids a cone of vectors that may lead to a collision. By showing that the proposed constraint is a valid CBF for quadrotors, we are able to leverage its real-time implementation via Quadratic Programs (QPs), called the CBF-QPs. Validation includes PyBullet simulations and hardware experiments on Crazyflie 2.1, demonstrating effectiveness in static and moving obstacle scenarios. Comparative analysis with literature, especially higher order CBF-QPs, highlights the proposed approach's less conservative nature.
\end{abstract}


\section{Introduction}
\label{section: Introduction}
\par Quadrotors are used in a wide range of applications, including search and rescue, environmental monitoring, agriculture, transportation, and entertainment \cite{kumar2015future}. In many of these applications, quadrotors operate in complex and dynamic environments, where they must navigate around obstacles such as trees, buildings, and other drones. The literature presents a variety of methods such as artificial potential field \cite{8022685}, reachability analysis \cite{8263977} \cite{RA-UAV}, and nonlinear model predictive control \cite{8442967} to address the problem of obstacle avoidance in UAVs.

In recent years, the Control Barrier Functions (CBFs) based approach \cite{7040372}\cite{Ames_2017} has emerged as a promising strategy for ensuring safe operation of autonomous systems. This is a model-based control design method, which provides a computationally efficient solution that can handle complex situations while guaranteeing safety. CBFs can be formulated as a Quadratic Problem (QP) and can be solved online, making them well-suited for real-time safety-critical applications. 
CBFs are specifically designed to enforce safety constraints and provide hard constraints on the system's trajectory, making them superior to Nonlinear MPC (NMPC) in terms of safety guarantees. NMPC, on the other hand, provides soft constraints on the system's trajectory, with the degree of constraint satisfaction dependent on the optimization algorithm's performance.

\begin{figure}[t]
    \centering
    \includegraphics[width=0.50\linewidth]{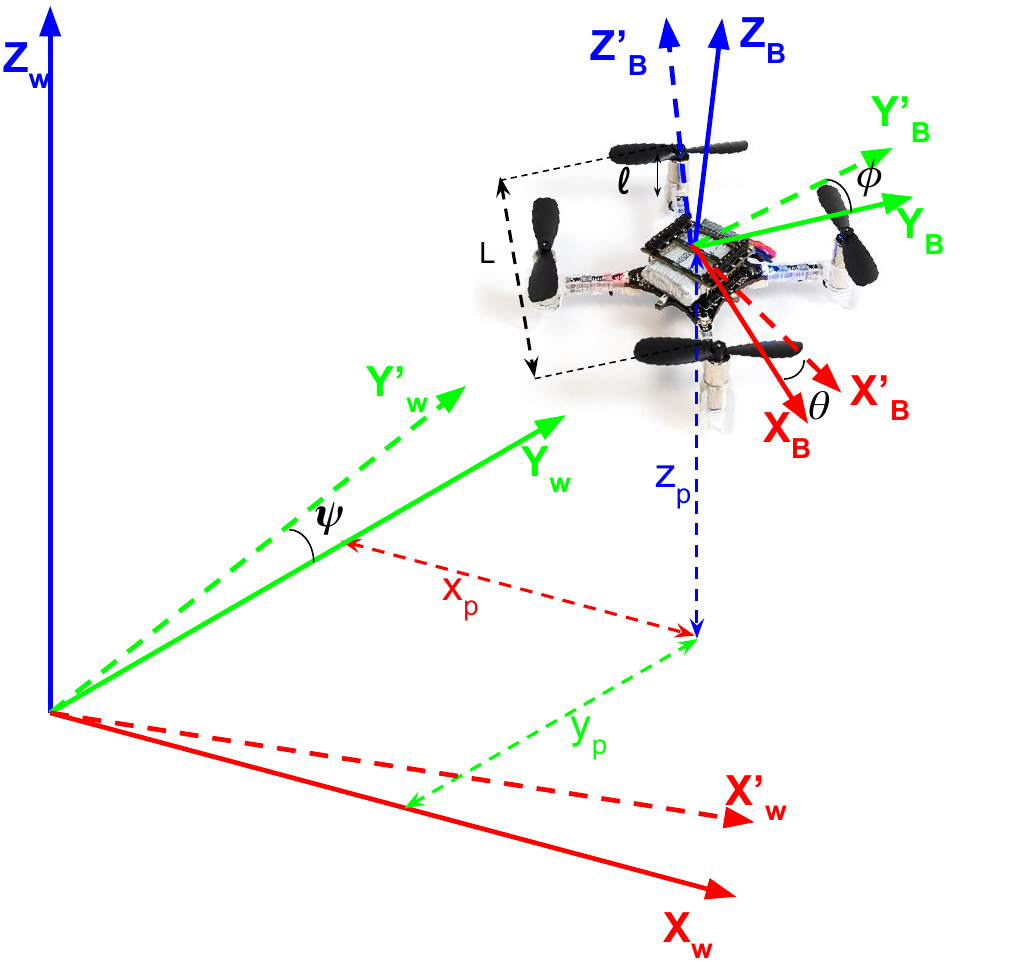}
\caption{World coordinates and body fixed coordinates of Crazyflie and Euler's angles defined in these coordinates}
\label{fig:models}
\end{figure}

In situations that involve complex interactions between subsystems or where safety requirements are highly dynamic and subject to frequent changes, reachability analysis may be limited. In such cases, CBFs are more suitable due to their ability to handle highly dynamic safety requirements \cite{https://doi.org/10.48550/arxiv.2106.13176}. While the artificial potential field approach is easy to implement, it suffers from limitations such as the possibility of getting stuck in local minima and difficulties in handling complex environments with multiple obstacles and \cite{Singletary2021ComparativeAO} shows that CBFs offer a viable, and arguably improved alternative to APFs for real-time obstacle avoidance.  Thus, the Control Barrier Functions approach provides a better solution for obstacle avoidance in UAVs, particularly in safety-critical scenarios. \cite{7525253} has shown collision avoidance using CBFs in planar quadrotor case.
In a recent work, \cite{DBLP:journals/corr/abs-1903-04706, 9516971} showed that Higher order CBFs are a generalized form of the exponential CBFs, thus it also addresses this problem of obstacle avoidance. However, a major challenge with this approach is the need to identify suitable penalty parameters (p's) and class $\mathcal{K}$ functions ($\alpha$'s) that can yield optimal results.

With regards to obstacle avoidance in dynamic environments, another class of approaches that is widely used is the method of collision cones \cite{Fiorini1993}, \cite{ doi:10.1177/027836499801700706}, \cite{709600}
It involves defining a cone-shaped region between two objects to represent the potential area of collision, which can be avoided by adjusting the object's trajectory to prevent the relative velocity vector from falling within the cone. This approach has several advantages, including its simplicity, efficiency, and adaptability to different environments. The method can be easily integrated into existing motion planning algorithms, takes into account the speed and direction of moving objects and the shape and size of potential obstacles, and can work in dynamic and unpredictable environments. As a result, the Collision Cone approach provides a reliable and flexible means of avoiding collisions in various robotic and autonomous systems. 

Despite its simplicity and effectiveness, the collision cone method has largely been restricted to offline motion planning/navigation problems, and its extensions for real-time implementations have been limited. However, by exploiting the CBF-QP formulations, we can synthesize a new class of CBFs through the notion of collision cones, which can then be implemented in real-time. This will be the main objective of this paper. This idea was originally proposed in \cite{C3BF} for the planar case (2D) and for wheeled robots, while we aim to extend this for 3D and for quadrotors, which are underactuated and have higher degrees of freedom (DoF).

\subsection{Contribution and Paper Structure}
The main idea is to realize a CBF-QP formulation for the quadrotor dynamics and for obstacles with non-zero velocity values.
The main contributions of our work are:

\begin{itemize}
    \item We formulate a direct method for safe trajectory tracking of quadrotors based on collision cone control barrier functions expressed through a quadratic program.
    \item We consider static and constant velocity obstacles of various dimensions and provide mathematical guarantees for collision avoidance.
    \item We compare the collision cone CBF with the state-of-the-art higher-order CBF (HO-CBF), and show how the former is better in terms of feasibility and safety guarantees. 
\end{itemize}

\subsection{Organisation}
The rest of this paper is organized as follows. Preliminaries explaining the quadrotor model, the concept of control barrier functions (CBFs), collision cone CBFs, and controller design are introduced in section \ref{section: Background}. The application of the above CBFs on the quadrotor to avoid obstacles of various shapes is discussed in section \ref{section: Safety Guarantee}. The simulation setup and results will be discussed in section \ref{section: Simulation Results}. Finally, we present our conclusion in section \ref{section: Conclusions}.

\section{Preliminaries}
\label{section: Background}
In this section, first, we will describe the dynamics of quadrotor. Next, we will formally introduce Control Barrier Functions (CBFs) and their importance for real-time safety-critical control. Finally, we will introduce  Collision Cone Control Barrier Function (C3BF) approach.

\subsection{Quadrotor model}
The quadrotor model has four propellers, which provides upward thrusts of $(f_1, f_2, f_3, f_4)$ (see Fig. \ref{fig:models}) and the states needed to describe the quadrotor system is given by $x = [x_p, y_p, z_p, \dot{x}_p, \dot{y}_p, \dot{z}_p,  \phi, \theta, \psi, {\omega}_{1}, {\omega}_{2}, {\omega}_{3}]$ . The quadrotor dynamics is as follows~\cite{quadfolk}:
\begin{equation}
\label{eqn:quadrotor_model}
	\underbrace{\begin{bmatrix}
		\dot{x}_p \\
		\dot{y}_p \\
            \dot{z}_p \\
            \ddot{x}_p \\
		\ddot{y}_p \\
            \ddot{z}_p \\
		\dot{\phi} \\
            \dot{\theta} \\
            \dot{\psi} \\
		\dot{\omega}_{1} \\
            \dot{\omega}_{2} \\
            \dot{\omega}_{3}
	\end{bmatrix}}_{\dot{\state}}
	=
	\underbrace{\begin{bmatrix}
		\dot{x}_p \\
		\dot{y}_p \\
            \dot{z}_p \\
            0 \\
            0 \\
            -g \\
            \textbf{W}^{-1}
            \begin{bmatrix}
                \omega_{1} \\
                \omega_{2} \\
                \omega_{3}
            \end{bmatrix}\\
            \\
		  -I^{-1} \vec{\omega} \times I \vec{\omega}  \\
            .
	\end{bmatrix}}_{f(\state)}
	+
	\underbrace{\begin{bmatrix}
		\\
            \begin{bmatrix}
                0 & 0 & 0 & 0 \\
                0 & 0 & 0 & 0 \\
                0 & 0 & 0 & 0 
            \end{bmatrix}\\
            \\
            \frac{1}{m_Q}\textbf{R}
            \begin{bmatrix}
                0 & 0 & 0 & 0 \\
                0 & 0 & 0 & 0 \\
                1 & 1 & 1 & 1 
            \end{bmatrix}\\
            \\
            \begin{bmatrix}
                0 & 0 & 0 & 0 \\
                0 & 0 & 0 & 0 \\
                0 & 0 & 0 & 0 
            \end{bmatrix}\\
            \\
		I^{-1}L
            \begin{bmatrix}
                1 & 0 & -1 & 0 \\
                0 & 1 & 0 & -1 \\
                c_{\tau} & -c_{\tau} & c_{\tau} & -c_{\tau} 
            \end{bmatrix}
	\end{bmatrix}}_{g(\state)}
	\underbrace{\begin{bmatrix}
		f_{1} \\
		f_{2} \\
            f_{3} \\
            f_{4} 
	\end{bmatrix}}_{u}
\end{equation}
$x_p$, $y_p$, and $z_p$ denote the coordinates of the vehicle’s center of the base of the quadrotor in an inertial frame. $\phi$, $\theta$ and $\psi$ represents the (roll, pitch \& yaw) orientation of the quadrotor.  (see Fig. \ref{fig:models}). \textbf{R} is the rotation matrix (from the body frame to the inertial frame), $m_Q$ is the mass of the quadrotor, $\textbf{W}$ is the transformation matrix for angular velocities from the inertial frame to the body frame, $I$ is the inertia matrix and $L$ is the diagonal length of the quadrotor. $c_{\tau}$ is the constant
that determines the torque produced by each propeller. Note that even though we restrict our study to quadrotors in this paper, the extension of the proposed CBF-QPs for different multi-rotor UAVs is straightforward.

\subsection{Control barrier functions (CBFs)}
Having described the vehicle models, we now formally introduce Control Barrier Functions (CBFs) and their applications in the context of safety. 
%
Given the quadrotor model, we have the nonlinear control system in affine form:
\begin{equation}
	\dot{\state} = f(\state) + g(\state)u
	\label{eqn: affine control system}
\end{equation}
where $\state \in \mathcal{D} \subseteq \mathbb{R}^n$ is the state of system, and $u \in \mathbb{U} \subseteq \mathbb{R}^m$ the input for the system. Assume that the functions $f: \mathbb{R}^n \rightarrow \mathbb{R}^n$ and $g: \mathbb{R}^n \rightarrow \mathbb{R}^{n \times m}$ are continuously differentiable. Specific formulation of $f,g$ for the quadrotor were described in \eqref{eqn:quadrotor_model}. Given a Lipschitz continuous control law $u = k(\state)$, the resulting closed loop system $\dot{\state} = f_{cl}(\state) = f(\state) + g(\state)k(\state)$ yields a solution $\state(t)$, with initial condition $\state(0) = \state_0$.
%
Consider a set $\mathcal{C}$ defined as the \textit{super-level set} of a continuously differentiable function $h:\mathcal{D}\subseteq \mathbb{R}^n \rightarrow \mathbb{R}$ yielding,
\begin{align}
\label{eq:setc1}
	\mathcal{C}                        & = \{ \state \in \mathcal{D} \subset \mathbb{R}^n : h(\state) \geq 0\} \\
\label{eq:setc2}
	\partial\mathcal{C}                & = \{ \state \in \mathcal{D} \subset \mathbb{R}^n : h(\state) = 0\}\\
\label{eq:setc3}
	\text{Int}\left(\mathcal{C}\right) & = \{ \state \in \mathcal{D} \subset \mathbb{R}^n : h(\state) > 0\}
\end{align}
It is assumed that $\text{Int}\left(\mathcal{C}\right)$ is non-empty and $\mathcal{C}$ has no isolated points, i.e. $\text{Int}\left(\mathcal{C}\right) \neq \phi$ and $\overline{\text{Int}\left(\mathcal{C}\right)} = \mathcal{C}$. 
The system is safe w.r.t. the control law $u = k(\state)$ if
	$\forall \: \state(0) \in \mathcal{C} \implies \state(t) \in \mathcal{C} \;\;\; \forall t \geq 0$.
We can mathematically verify if the controller $k(\state)$ is safeguarding or not by using Control Barrier Functions (CBFs), which is defined next.

\begin{definition}[Control barrier function (CBF)]{\it
\label{definition: CBF definition}
Given the set $\mathcal{C}$ defined by \eqref{eq:setc1}-\eqref{eq:setc3}, with $\frac{\partial h}{\partial \state}(\state) \neq 0\; \forall \state \in \partial \mathcal{C}$, the function $h$ is called the control barrier function (CBF) defined on the set $\mathcal{D}$, if there exists an extended \textit{class} $\mathcal{K}$ function $\kappa$ such that for all $\state \in \mathcal{D}$:

\begin{equation}
\begin{aligned}
    \underbrace{\text{sup}}_{ u \in \mathbb{U}}\! \left[\underbrace{\mathcal{L}_{f} h(\state) + \mathcal{L}_g h(\state)u} \iffalse+ \frac{\partial h}{\partial t}\fi_{\dot{h}\left(\state, u\right)} \! + \kappa\left(h(\state)\right)\right] \! \geq \! 0
\end{aligned}
\end{equation}
where $\mathcal{L}_{f} h(\state) = \frac{\partial h}{\partial \state}f(\state)$ and $\mathcal{L}_{g} h(\state)= \frac{\partial h}{\partial \state}g(\state)$ are the Lie derivatives. 
}
\end{definition}

Given this definition of a CBF, we know from \cite{Ames_2017} and \cite{8796030} that any Lipschitz continuous control law $k(\state)$ satisfying the inequality: $\dot{h} + \kappa( h )\geq 0$ ensures safety of $\mathcal{C}$ if $x(0)\in \mathcal{C}$, and asymptotic convergence to $\mathcal{C}$ if $x(0)$ is outside of $\mathcal{C}$. 

\subsection{Safety Filter Design}
\label{subsection: safe_controller}
Having described the CBF, we can now describe the Quadratic Programming (QP) formulation of CBFs. CBFs act as \textit{safety filters} which take the desired input $u_{des}(\state,t)$ and modify this input in a minimal way: 

\begin{equation}
\begin{aligned}
\label{eqn: CBF QP}
u^{*}(x,t) &= \argmin_{u \in \mathbb{U} \subseteq \mathbb{R}^m} \norm{u - u_{des}(x,t)}^2\\
\quad & \textrm{s.t. } \mathcal{L}_f h(x) + \mathcal{L}_g h(x)u + \kappa \left(h(x)\right) \geq 0\\
\end{aligned}
\end{equation}
This is called the Control Barrier Function based Quadratic Program (CBF-QP). The CBF-QP control $u^{*}$ can be obtained by solving the above optimization problem using KKT conditions.

\subsection{Collision Cone CBF (C3BF) candidate for quadrotors}
\label{subsection: C3BF}
We now formally introduce the proposed CBF candidate for quadrotors. Let us assume that the obstacle is centered at $(c_x(t), c_y(t), c_z(t))$ and with dimensions $c_1,c_2,c_3$. We assume that $c_x(t),c_y(t), c_z(t)$ are differentiable and their derivatives are piece-wise constants.
The proposed approach combines the idea of potential unsafe directions given by collision cone (Fig. \ref{fig:3D CBF}, \ref{fig:Projection CBF}) as an unsafe set to formulate a CBF as in \cite{C3BF}.
Consider the following CBF candidate:
\begin{equation}
    h(\state, t) = < \prel, \vrel> + \| \prel\|\| \vrel\|\cos\phi ,
    \label{eqn:CC-CBF}
\end{equation}
where $\prel$ is the relative position vector between the body center of the quadrotor and the center of the obstacle, $\vrel$ is the relative velocity, $<\cdot , \cdot>$ is the dot product of 2 vectors and $\phi$ is the half angle of the cone, the expression of $\cos\phi$ is given by $\frac{\sqrt{\|\prel\|^2 - r^2}}{\|\prel\|}$ (see Fig. \ref{fig:3D CBF}, \ref{fig:Projection CBF}). Precise mathematical definitions for $\prel, \vrel$ will be given in the next section. The proposed constraint simply ensures that the angle between $\prel, \vrel$ is less than $180^\circ - \phi$.

In \cite{C3BF}, it was shown that the proposed candidate \eqref{eqn:CC-CBF} is valid CBF for wheeled mobile robots, i.e., the unicycle and bicycle. With this result, CBF-QPs were constructed that yielded collision-avoiding behaviors in these models. We aim to extend this to the class of quadrotors. 

\section{Collision Cone CBFs on Quadrotor}
\label{section: Safety Guarantee}
Having described the Collision Cone CBF candidate, we will see their application on quadrotors in this section. Based on the shape of the obstacle we can divide the proposed candidates into two cases: 

%

\subsection{3D CBF candidate}
\label{section: 3D-CBF}
In scenarios where the dimensions of the obstacle are roughly equal, we can model the obstacle as a sphere, as illustrated in Fig. \ref{fig:3D CBF}. The resulting CBF in this context is referred to as the \textbf{3D CBF}. The relative position vector between the body center of the quadrotor and the center of the obstacle is as follows:
\begin{align}\label{eq:pos-vec-3D}
    \prel := \begin{bmatrix}
        c_x \\
        c_y \\
        c_z
    \end{bmatrix}
    - \left (
    \begin{bmatrix}
        x_p \\
        y_p \\
        z_p
    \end{bmatrix}
    + \textbf{R} \begin{bmatrix}
        0 \\
        0 \\
        l
    \end{bmatrix}
    \right )
\end{align}
Here $l$ is the distance of the body center from the base (see Fig. \ref{fig:models}). $c_x,c_y,c_z$ represents the obstacle location as a function of time. Also, since the obstacles are of constant velocity, we have $\Ddot{c}_x= \Ddot{c}_y= \Ddot{c}_z = 0$. We obtain its relative velocity as
\begin{align}\label{eq:vel-vec-3D}
    \vrel := \dot{p}_{rel}
\end{align}

\begin{figure}[t]
    \includegraphics[width=0.9\linewidth]{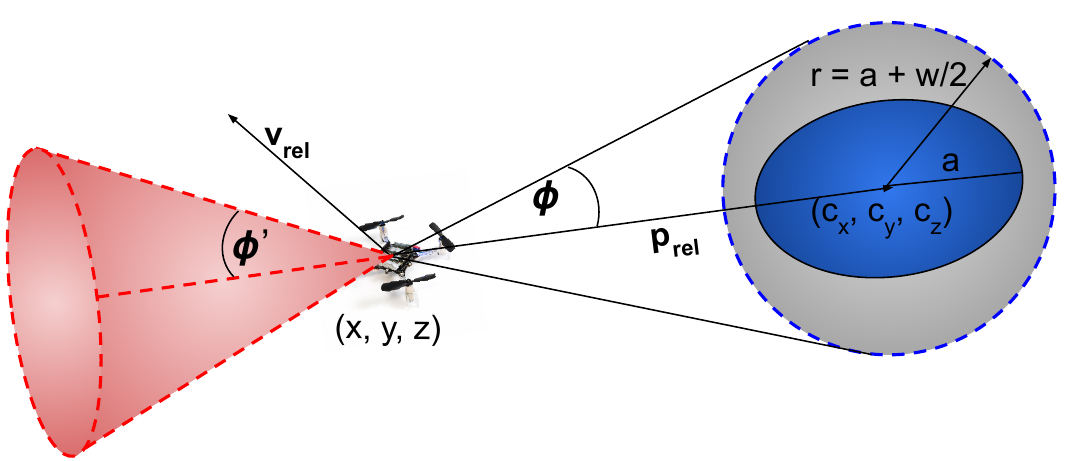}
\caption{\textbf{3D CBF} candidate: The dimensions of the obstacle are comparable to each other, it can be assumed as a sphere}
\label{fig:3D CBF}
\end{figure}

Having introduced Collision Cone CBF candidates in \ref{subsection: C3BF}, the next step is to formally verify that they are, indeed, valid CBFs. 
%
%
  %
We have the following result.

\begin{theorem}\label{thm:CC-CBF-3D}{\it
Given the quadrotor model \eqref{eqn:quadrotor_model}, the proposed CBF candidate \eqref{eqn:CC-CBF} with $\prel,\vrel$ defined by \eqref{eq:pos-vec-3D}, \eqref{eq:vel-vec-3D} is a valid CBF defined for the set $\mathcal{D}$.}
\end{theorem}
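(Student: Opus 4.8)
The plan is to verify Definition~\ref{definition: CBF definition} directly. The key structural fact is that, although $h$ in \eqref{eqn:CC-CBF} depends only on the relative position and relative velocity, the control $u=(f_1,f_2,f_3,f_4)$ already appears in $\dot h$ through $\vreldot$; thus $h$ has relative degree one, and it suffices to show that the control coefficient $\mathcal{L}_g h(\state)$ never vanishes on $\mathcal{D}$. Once that is in hand, $\sup_{u\in\mathbb{U}}\mathcal{L}_g h(\state)u=+\infty$ (taking $\mathbb{U}=\mathbb{R}^{m}$), so the CBF inequality holds for \emph{any} extended class $\mathcal{K}$ function $\kappa$, and no tuning of $\kappa$ is required.

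First I would differentiate $h$. Rewriting $h=\langle\prel,\vrel\rangle+\norm{\vrel}\sqrt{\norm{\prel}^{2}-r^{2}}$ via $\cos\phi=\sqrt{\norm{\prel}^{2}-r^{2}}/\norm{\prel}$ and using $\preldot=\vrel$ gives
\[
\dot h=\norm{\vrel}^{2}+\frac{\langle\prel,\vrel\rangle\,\norm{\vrel}}{\sqrt{\norm{\prel}^{2}-r^{2}}}+\Big\langle\,\prel+\tfrac{\sqrt{\norm{\prel}^{2}-r^{2}}}{\norm{\vrel}}\,\vrel,\ \vreldot\,\Big\rangle .
\]
Substituting \eqref{eqn:vrel-dot-3D}, i.e.\ $\vreldot=-\textbf{R}Mu+(\text{control-free terms})$ with $M$ the $3\times4$ matrix appearing there, identifies
\[
\mathcal{L}_g h(\state)=-\Big(\prel+\tfrac{\sqrt{\norm{\prel}^{2}-r^{2}}}{\norm{\vrel}}\,\vrel\Big)^{\!\top}\textbf{R}\,M ,
\]
and $\mathcal{L}_f h(\state)$ absorbs the first two terms of $\dot h$ together with the control-free part of $\vreldot$; its precise form is irrelevant. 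The most computational step here is establishing \eqref{eqn:vrel-dot-3D} itself: differentiating the body-to-inertial rotation $\textbf{R}$ twice along \eqref{eqn:quadrotor_model} and substituting $\dot{\vec{\omega}}=-I^{-1}\vec{\omega}\times I\vec{\omega}+I^{-1}L(\cdots)u$, then checking that the surviving $u$-dependent terms are exactly the translational thrust term $\tfrac{1}{m_Q}\textbf{R}[0,0,\textstyle\sum_i f_i]^{\top}$ and the torque terms entering $\ddot{\textbf{R}}$, which assemble into $-\textbf{R}M$.

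The hard part is showing $\mathcal{L}_g h(\state)\neq0$ on $\mathcal{D}$. Since $\textbf{R}$ is a rotation matrix, hence invertible, and the three rows of $M$ are linearly independent whenever $I_{xx},I_{yy},L,m\neq0$ (a quick check), the linear map $w\mapsto M^{\top}\textbf{R}^{\top}w$ is injective; hence $\mathcal{L}_g h(\state)=0$ would force $w:=\prel+\tfrac{\sqrt{\norm{\prel}^{2}-r^{2}}}{\norm{\vrel}}\vrel=0$. But $w=0$ makes $\prel$ antiparallel to $\vrel$ with $\norm{\prel}=\sqrt{\norm{\prel}^{2}-r^{2}}$, i.e.\ $r=0$, contradicting $r>0$. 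Therefore $\mathcal{L}_g h(\state)\neq0$ throughout $\mathcal{D}$, which I would take to be the set of states with $\norm{\prel}>r$ and $\vrel\neq0$ (precisely where $h$ is $C^{1}$), and the CBF inequality is satisfied.

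Finally I would dispatch the remaining definitional details: the singular locus $\vrel=0$ (where $h=0$ and the relative configuration is momentarily frozen) is either excluded from $\mathcal{D}$ or handled by a limiting argument; and $\partial h/\partial\state\neq0$ on $\partial\mathcal{C}=\{h=0\}$ follows from its $\prel$-block, $\vrel+\tfrac{\norm{\vrel}}{\sqrt{\norm{\prel}^{2}-r^{2}}}\prel$, which again vanishes only if $r=0$. The same computation applies verbatim to the Projection-CBF candidate (with $\prel$ replaced by its projection onto the plane normal to the cylinder axis), since neither $\textbf{R}$ nor $M$ changes. Altogether, \eqref{eqn:CC-CBF} is a valid CBF on $\mathcal{D}$.
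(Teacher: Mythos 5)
Your proposal is correct and follows essentially the same route as the paper: compute $\dot h$, identify $\mathcal{L}_g h$ as $-w^{\top}\textbf{R}M$ with $w=\prel+\tfrac{\sqrt{\normnsz{\prel}^{2}-r^{2}}}{\normnsz{\vrel}}\vrel$, and rule out $\mathcal{L}_g h=0$ by combining the full rank of $\textbf{R}M$ with the observation that $w=0$ would force $\normnsz{\prel}=\sqrt{\normnsz{\prel}^{2}-r^{2}}$, i.e.\ $r=0$ (the paper phrases the same two facts as a two-case analysis). Your treatment is in fact slightly more careful than the paper's, since you also address the singular locus $\vrel=0$ and the requirement $\partial h/\partial\state\neq0$ on $\partial\mathcal{C}$, which the paper leaves implicit.
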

Please refer to \cite[Thm 3]{C3BF_tac} for the proof of Theorem.

\subsection{Projection CBF candidate}
\label{section: proj-CBF}

\begin{figure}[t]
    \centering
    \includegraphics[width=0.8\linewidth]{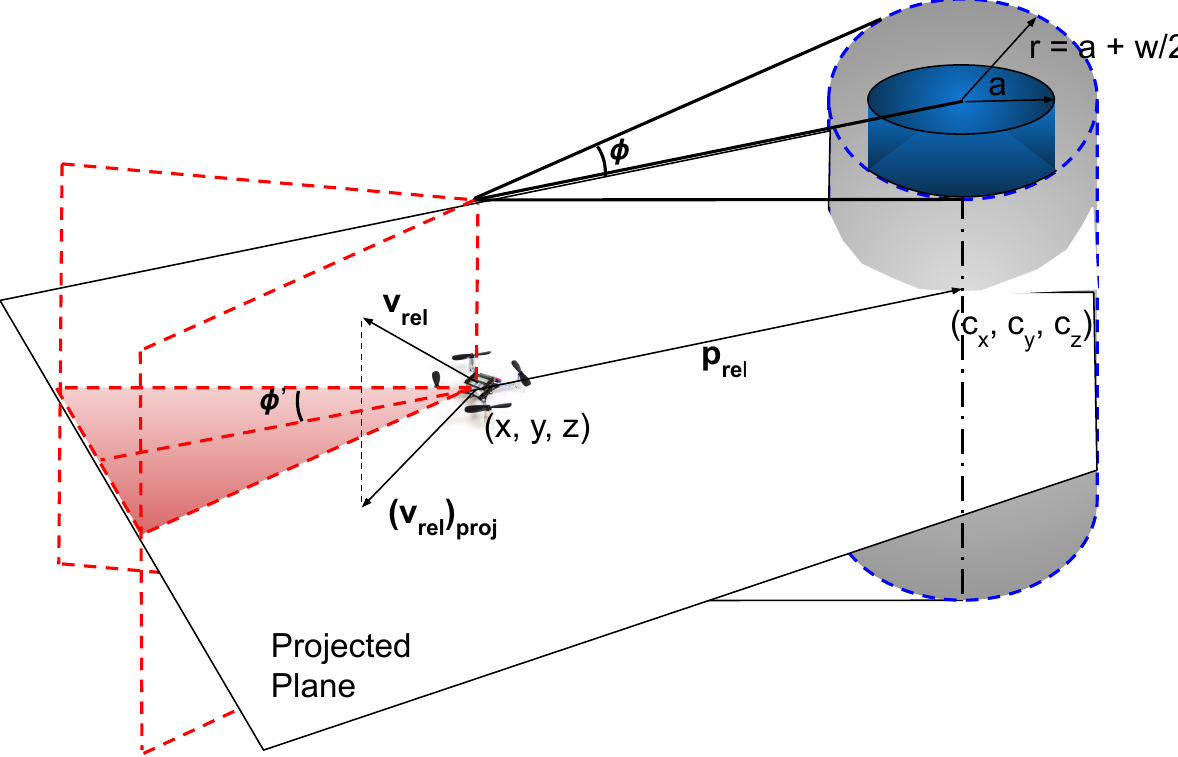}
\caption{\textbf{Projection CBF} candidate: One of the dimensions, of the obstacle, is bigger than the other dimensions, it can be assumed as a cylinder.}
\label{fig:Projection CBF}
\end{figure}
When an obstacle has significantly disparate dimensions, it can be approximated as a cylinder, giving rise to the \textbf{Projection CBF} (refer to Fig. \ref{fig:Projection CBF}). To derive this, we calculate the relative position vector between the quadrotor's body center and the intersection point of the obstacle's axis with the projection plane, which is perpendicular to the axis. Thus, we obtain:
\begin{align}\label{eqn:pos-vec-proj}
    (\prel)_{proj} := \mathcal{P}\left (\begin{bmatrix}
        c_x \\
        c_y \\
        c_z
    \end{bmatrix}
    - \left (
    \begin{bmatrix}
        x_p \\
        y_p \\
        z_p
    \end{bmatrix}
    + \textbf{R} \begin{bmatrix}
        0 \\
        0 \\
        l
    \end{bmatrix}
    \right ) \right ).
\end{align}
Here $l$ is the distance of the body center from the base (see Fig. \ref{fig:models}). $\mathcal{P}: \mathbb{R}^3 \to \mathbb{R}^3 $ is the projection operator, which can be assumed to be a constant\footnote{Note that the obstacles are always translating and not rotating. In addition, it is not restrictive to assume that the translation direction is always perpendicular to the cylinder axis. This makes the projection operator a constant.}. 
Now, since the relative position lies on the projection plane, we have one more condition to satisfy:
\begin{align}\label{eqn:p_in_plane}
    < (\prel)_{proj}, \hat{n} > = 0,   
\end{align}
where, $\hat{n}$ is the normal to the plane. Also, the relative velocity is given by:
\begin{align}\label{eqn:vel-vec-proj}
    (\vrel)_{proj} := \frac{d({p}_{rel})_{proj}}{dt} = (\preldot)_{proj}
\end{align}

\begin{theorem}\label{thm:CC-CBF-proj}{\it
Given the quadrotor model \eqref{eqn:quadrotor_model}, the proposed CBF candidate \eqref{eqn:CC-CBF} with $\prel,\vrel$ defined by \eqref{eqn:pos-vec-proj}, \eqref{eqn:vel-vec-proj} is a valid CBF defined for the set $\mathcal{D}$.}
\end{theorem}
Please refer to \cite[Thm 4]{C3BF_tac} for the proof of Theorem.

\subsection{Comparison with Higher Order CBFs}
We introduce the state-of-the-art Higher Order Control Barrier Functions (HO-CBFs) and compare them with the proposed C3BF in this section. Given that the collision constraints are with respect to position, the associated CBF has a relative degree of two. Therefore, it is necessary to establish a higher-order CBF with $m = 2$ as outlined in \cite[Eq. 16]{9516971}, which is expressed as: 
\begin{equation}
\begin{aligned}
\psi_{1}(x,t) &= \dot{b}(x,t) + p\alpha_{1} (b(x,t))\\
\psi_{2}(x,t) &= \dot{\psi_{1}}(x,t) + p \alpha_{2} (\psi_{1}(x,t)) ,\\
\end{aligned}
\end{equation}
where $b(\state,t) = (c_x(t) - x_p)^2 + (c_y(t) - y_p)^2 + (c_z(t) - z_p)^2 - r^2$, and r is the encompassing radius given by $r = max(c_1, c_2, c_3)$. $\alpha_1, \alpha_2$ are both class $\mathcal{K}$ functions, and $p$ is a tunable constant. As explained previously, $c_x,c_y,c_z$ is the center location of the obstacle as a function of time. Let us examine the form of HO-CBF where $\alpha_{1}$ is a square root function (which is also strictly increasing), and $\alpha_{2}$ is a linear function, due to its similarity to C3BF. Consequently, the resulting Higher Order CBF candidate takes the following form:
\begin{equation}
    h_{HO}(\state, t) = < \prel, \vrel> + \gamma \sqrt{(\|\prel\|^2 - r^2)}.
    \label{eqn:HO-CBF}
\end{equation}
%
We can show that the above-mentioned HO-CBF is also a valid CBF for quadrotors. We will now compare it with the proposed C3BF.

The C3BF concept aims to prevent the $\vrel$ vector, which represents the relative velocity between the quadrotor and the obstacle, from entering the collision cone region defined by the half-angle $\phi$. Figures \ref{fig:3D CBF}, \ref{fig:Projection CBF} and \ref{fig:HO-C3-CBF} illustrate this idea. We can rewrite the HO-CBF formula presented in \eqref{eqn:HO-CBF} in the following form: 
\begin{align}
h_{HO}(\state, t) = < \prel, \vrel> + \|\prel\| \|\vrel\| cos(\phi')
    \label{eqn:HO-CBF-CC}
\end{align}
where, $cos(\phi') = \frac{\gamma}{\|\vrel\|}cos(\phi)$.
If we are able to identify a suitable $\gamma$ (penalty term) for the given HO-CBF, it would result in a valid CBF as per \cite{9516971}. Nonetheless, in such a scenario where $\gamma$ remains constant and $\|\vrel\|$ goes on increasing, it leads to an increase in $\phi'$, thus, overestimating the cone as can be seen in Fig.\ref{fig:HO-C3-CBF}. Conversely, with the C3BF approach, we permit the penalty term to vary over time, i.e., $\gamma = \|\vrel\|$, resulting in a more precise estimation of the collision cone compared to the HO-CBF case. 
This also shows that C3BF is not a special case of Higher Order CBF.
This is also evident from the simulation outcomes of both CBFs, as demonstrated in Section \ref{section: Simulation Results}.

\begin{figure}[t]
    \includegraphics[width=0.9\linewidth]{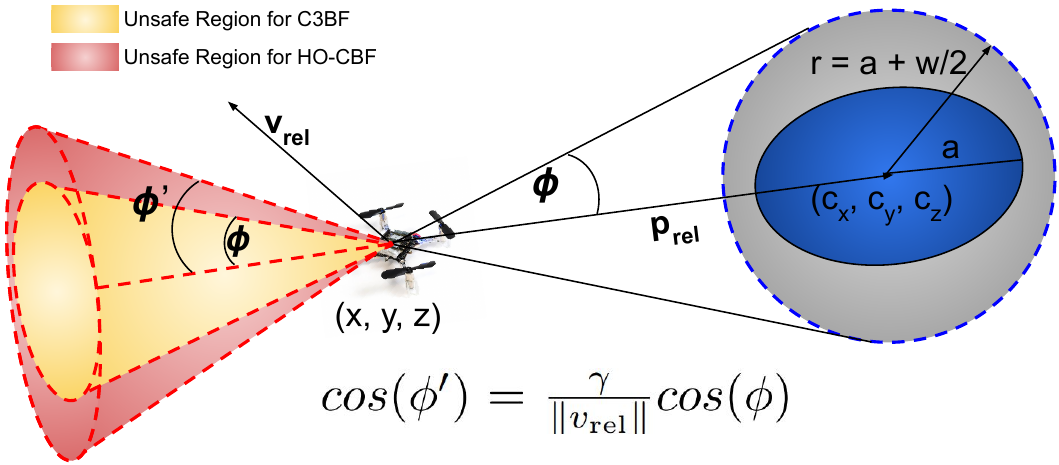}
\caption{Comparison of HO-CBF with C3BF. Here we are trying to compare the $\phi'$ and $\phi$ obtained from the two CBF formulations. It can be observed that $\phi'$ (pink cone) is dependent on $v_{rel}$, while $\phi$ (yellow cone) is a constant. The HO-CBF guarantees safety for a set that is not only smaller but also dependent on $v_{rel}$ as shown by the pink cone. Hence, HO-CBF is more conservative compared to C3BF.}
\label{fig:HO-C3-CBF}
\end{figure}

\section{Results and Discussions}
\label{section: Simulation Results}
\par We have validated the C3BF-QP based controller on quadrotors for both 3D and Projection CBF cases. We have used a PD controller as a reference controller to track the desired path with constant target velocities. Note that the reference controller can be replaced by any existing trajectory tracking/path-planning like the MPC \cite{9483029}. For the class $\mathcal{K}$ function in the CBF inequality, we chose $\kappa(h) = \gamma h$, where $\gamma=1$.

\begin{figure}
       \centering
        \begin{subfigure}[b]{0.46\textwidth}
        \includegraphics[width=\textwidth]{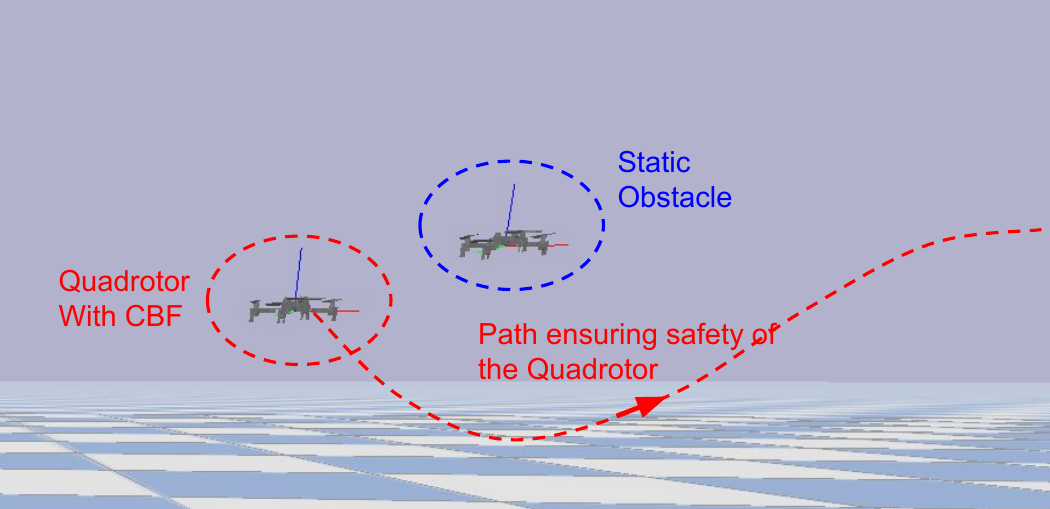}
        \caption{}
        \end{subfigure}
        \begin{subfigure}[b]{0.20\textwidth}
        \includegraphics[width=\textwidth]{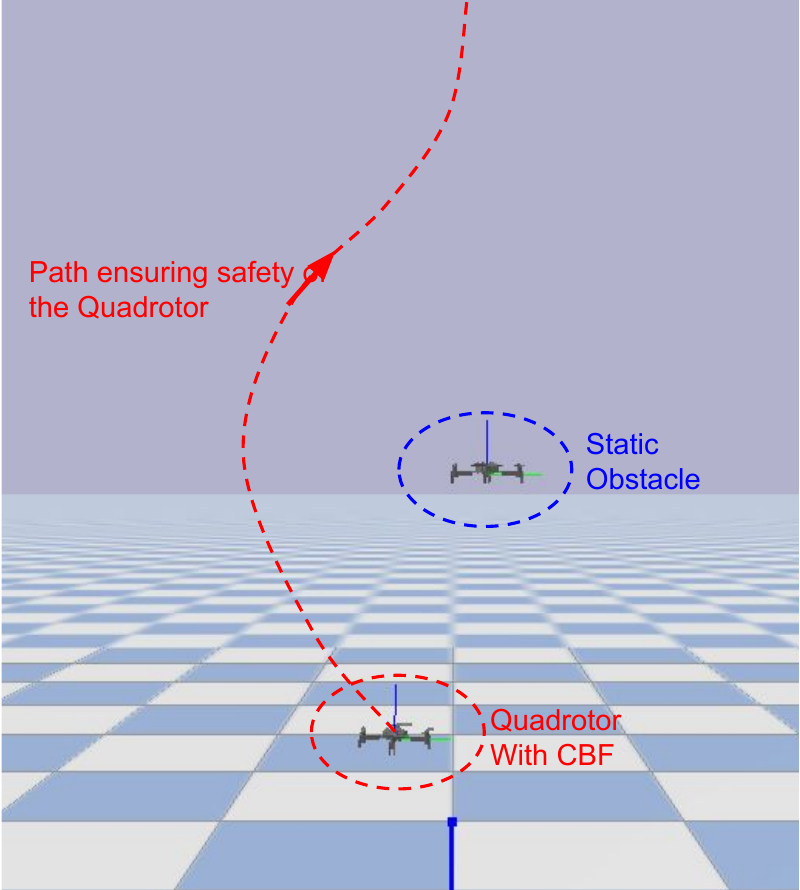}
        \caption{}
        \end{subfigure}
        \begin{subfigure}[b]{0.235\textwidth}
        \includegraphics[width=\textwidth]{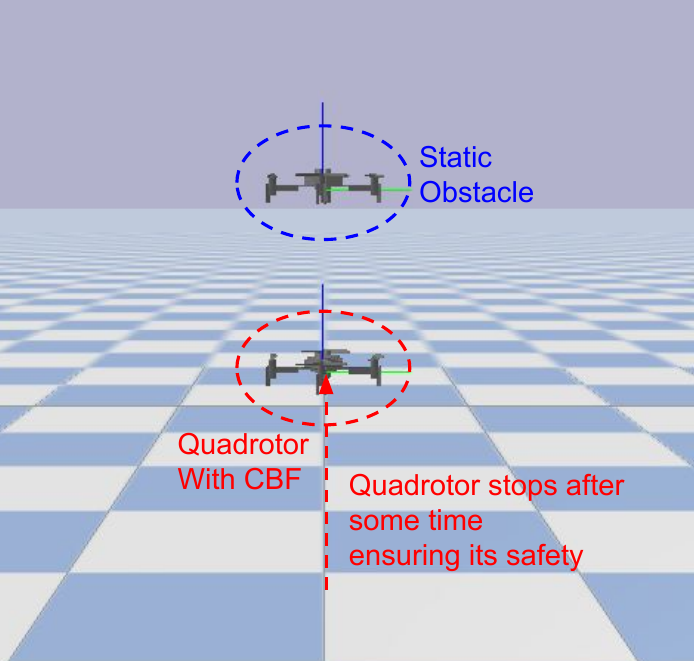}
        \caption{}
        \end{subfigure}
        \caption{Interaction with static obstacles: overtaking (a), (b), and braking (c) behavior of the quadrotor, Section \ref{section: 3D-CBF}.In all these cases the reference velocity of the quadrotor is 1m/s.}
        \label{fig:static-obs}
    \end{figure}

\begin{table}[b]

\begin{tabular}{|l|l|l|}
\hline
\textbf{Variables} & \textbf{Definition}         & \textbf{Value} \\ \hline
g                  & Gravitational acceleration  & $9.81 kg \cdot m/s^2$   \\ \hline
m                  & Mass of quadrotor           & $0.027 kg$        \\ \hline
L                  & Distance between two opp. rotors & 0.130 $m$         \\ \hline
l                  & Distance of center from base & $0.014 m$         \\ \hline
Ix, Iy             & Inertia about x, y-axis  & $2.39\cdot 10^{-5} kg \cdot m^2$    \\ \hline
Iz                 & Inertia about z-axis       & $3.23\cdot 10^{-5} kg \cdot m^2$    \\ \hline
kf                 & Motor’s thrust constant     & $3.16 \cdot 10^{-10}$          \\ \hline
km                 & Motor’s torque constant     & $7.94 \cdot 10^{-12}$          \\ \hline
\end{tabular}
\caption{Modelling parameters of Crazyflie}
\label{table:quadrotor_parameters}

\end{table}

\subsection{Simulation setup}
The simulations were conducted using the multi-drone environment \cite{pybullet-drones} on Pybullet \cite{coumans2019}, a Python-based physics simulation engine. The parameters of Crazyflie are tabulated in \ref{table:quadrotor_parameters}. Having presented our proposed control method design, we now test our framework under three different scenarios to illustrate the controller performance. These scenarios include the interaction of a quadrotor with (1) a static obstacle (3D case) Fig. \ref{fig:static-obs}, (2) a moving obstacle (3D case) Fig. \ref{fig:moving-obs} and (3) an elongated obstacle (Projection case) Fig. \ref{fig:long-obs}.


\begin{figure}
       \centering
        \begin{subfigure}[b]{0.33\textwidth}
        \includegraphics[width=\textwidth]{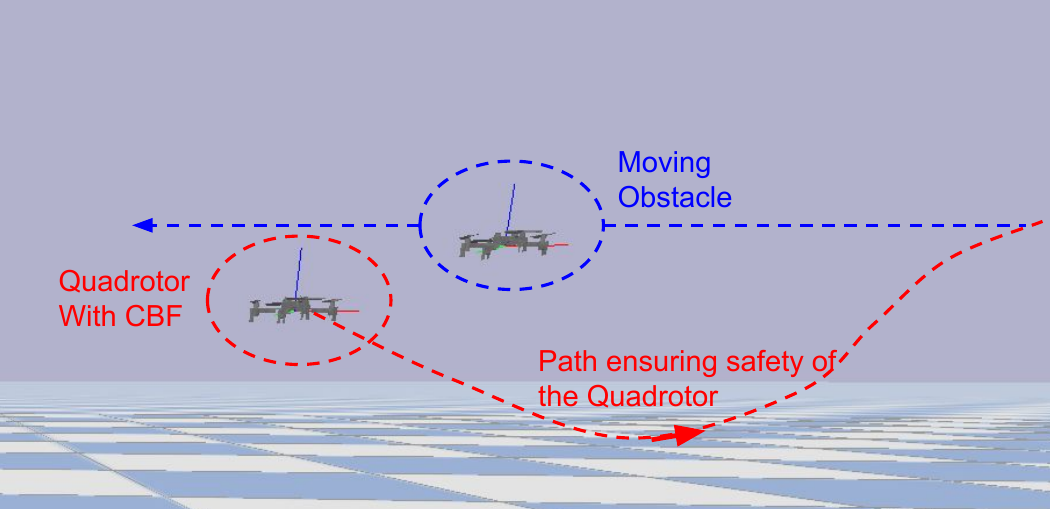}
        \caption{}
        \end{subfigure}
        \begin{subfigure}[b]{0.14\textwidth}
        \includegraphics[width=\textwidth]{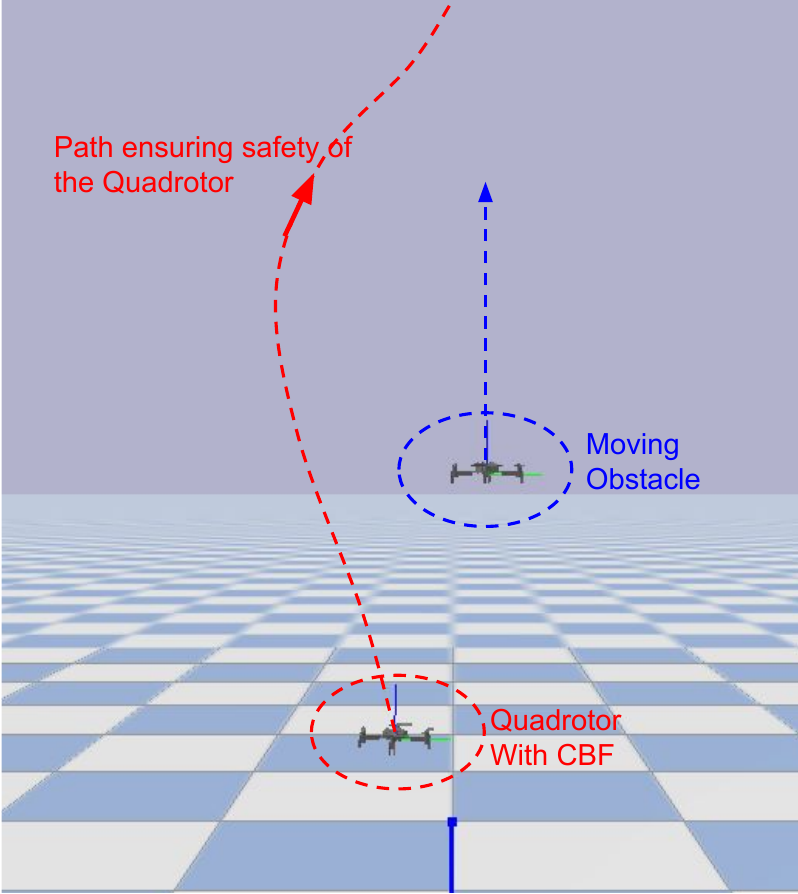}
        \caption{}
        \end{subfigure}
        \begin{subfigure}[b]{0.235\textwidth}
        \includegraphics[width=\textwidth]{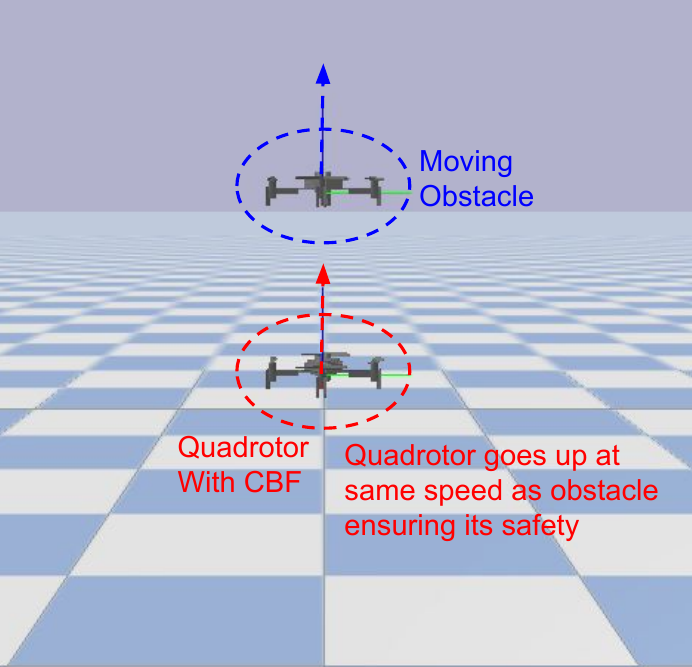}
        \caption{}
        \end{subfigure}
        \begin{subfigure}[b]{0.235\textwidth}
        \includegraphics[width=\textwidth]{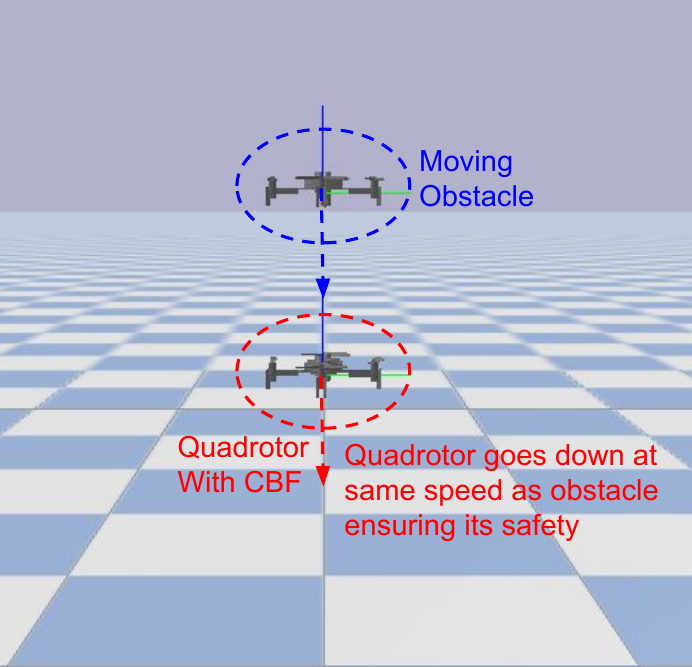}
        \caption{}
        \end{subfigure}
        \caption{Interaction with moving obstacles: overtaking (a), (b), slowing (c), and reversing (d) behavior of the quadrotor, section \ref{section: 3D-CBF}. In all these cases the reference velocity of the quadrotor is 1m/s and the obstacle quadrotor speed is 1m/s in case (a) and 0.1 m/s in (b),(c),(d).}
        \label{fig:moving-obs}
    \end{figure}


    \begin{figure}
       \centering
        \begin{subfigure}[b]{0.22\textwidth}
        \includegraphics[width=\textwidth]{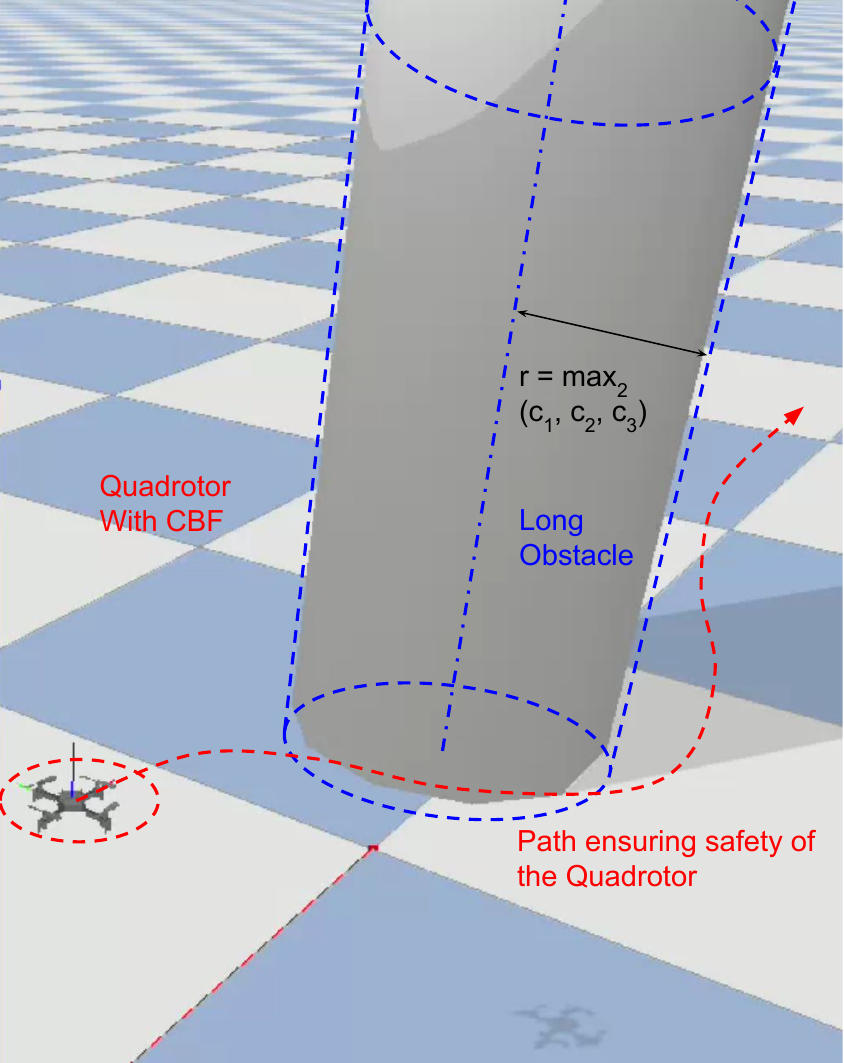}
        \caption{}
        \end{subfigure}
        \begin{subfigure}[b]{0.22\textwidth}
        \includegraphics[width=\textwidth]{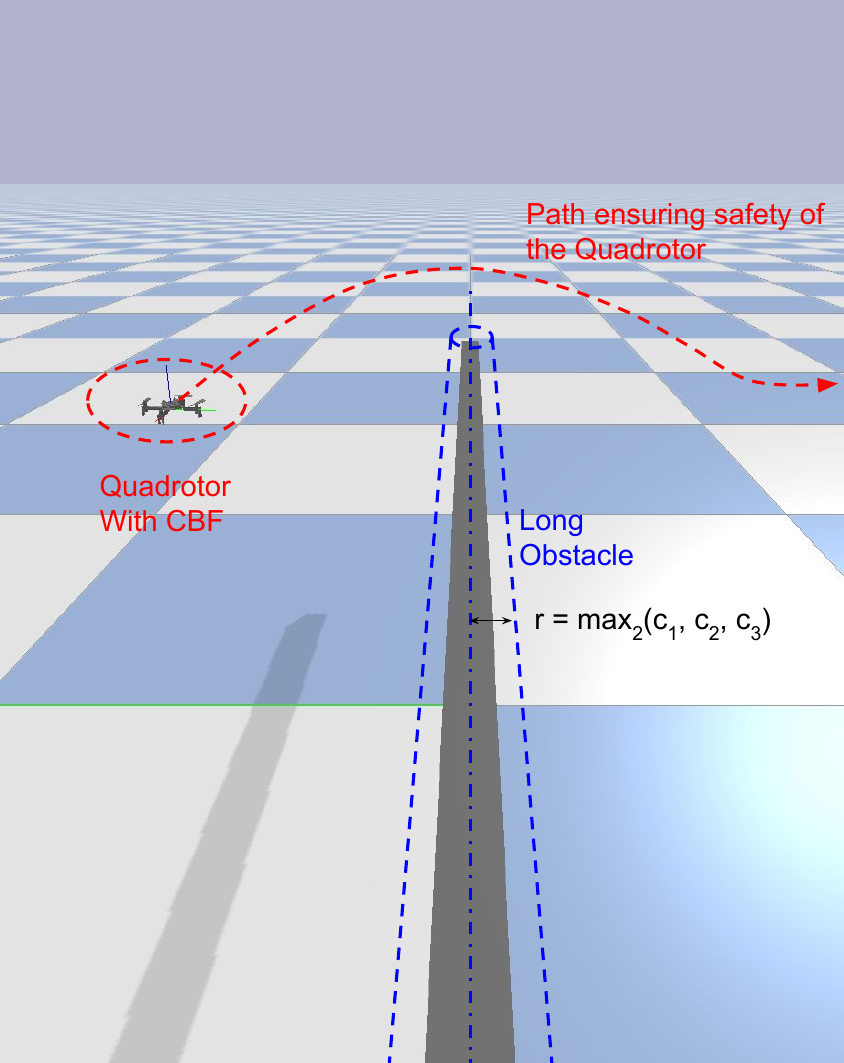}
        \caption{}
        \end{subfigure}
        \caption{Interaction with longer obstacles: moving from side (a) and top (b), Section \ref{section: proj-CBF}. In all these cases the reference velocity of the quadrotor is 1m/s. }
        \label{fig:long-obs}
    \end{figure}

\subsection{ Experimental Results}

The experimental results with Bitcraze\textsuperscript{\texttrademark} Crazyflie 2.1 aerial drone are presented to demonstrate the efficacy of the C3BF controller framework. The global position of the drone as well as the obstacle is measured using Qualisys\textsuperscript{\texttrademark} Miqus M3 motion capture system with a tracking frequency of 100 $Hz$. Further, for the drone, the global position from the motion capture system is fused with the onboard IMU data via the Extended Kalman Filter to get the filtered state. The control commands are generated by an off-board computer and transmitted to the drone via a radio link. The communication with the drone is facilitated through the Crazyflie Python library \cite{cfclient}. Experiments are performed for the cases with a single static obstacle, multiple static obstacles, and moving obstacles. The graphs and videos of hardware experiments are available here\footnote{\label{note: exp videos link} \url{https://tayalmanan28.github.io/C3BF-UAV/}}.
Hence, the experimental results verify the efficacy of the proposed scheme for obstacle avoidance.

\subsection{Comparison between C3BF and HO-CBF}
All the aforementioned cases were tested with the HO-CBF to compare its performance against C3BF. We observe that the HO-CBF could not avoid a high-speed approaching obstacle. Moreover, it is not able to properly avoid the longer obstacles in the projection CBF case. These shortcomings of the Higher Order CBF are also demonstrated in the simulation video available here$^{\ref{note: exp videos link}}$.

\subsection{Robustness of C3BF}
Without changing the above control framework we can observe that the C3BF is robust in the following two cases: 

\subsubsection{Multiple Obstacles}
The quadrotor successfully navigates through a series of obstacles (both Spherical and Long obstacles) avoiding collisions and showcasing robustness as in Fig. \ref{fig:robustness} (a) \& (b). 

\subsubsection{Multiple quadrotors with C3BF-QPs}
In multi-agent scenarios, where multiple quadrotors employ the collision cone CBF-QP (Fig. \ref{fig:robustness} (c)), both the ego-quadrotor and the approaching quadrotor are able to avoid collision in different configurations (static or moving), thus demonstrating robustness with respect to obstacles following the same Collision Cone CBF controller. 

\begin{figure}
       \centering
        \begin{subfigure}[b]{0.46\textwidth}
        \includegraphics[width=\textwidth]{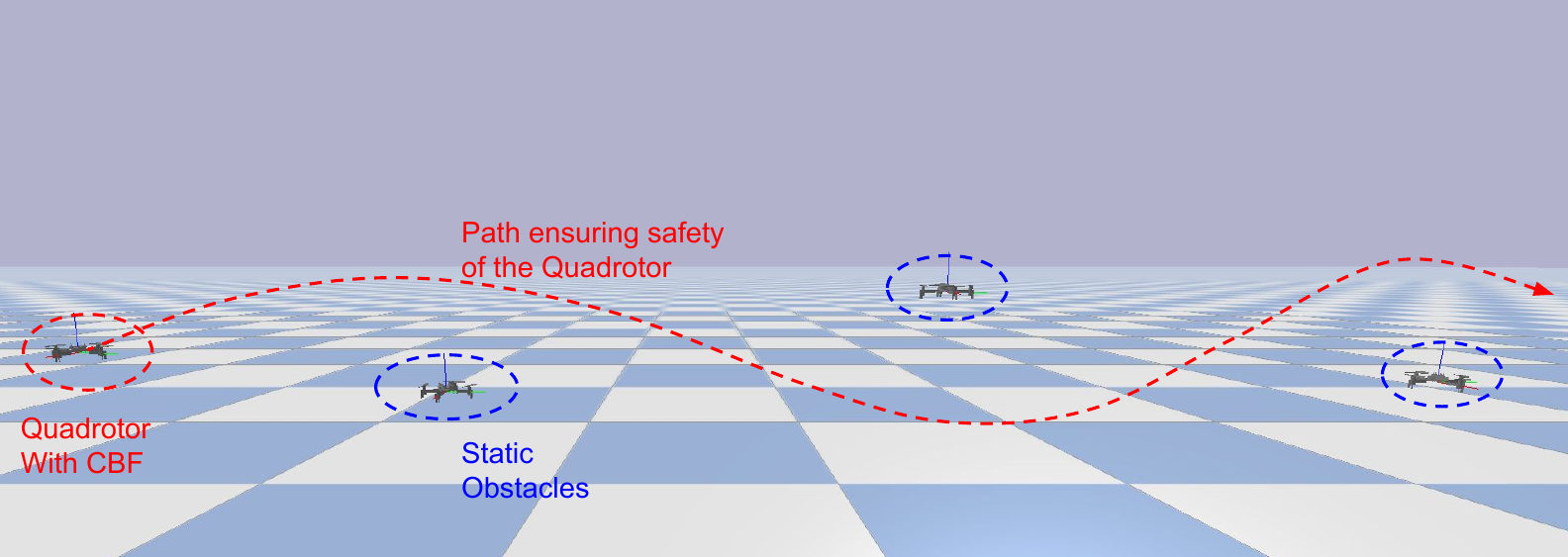}
        \caption{}
        \end{subfigure}
        \begin{subfigure}[b]{0.155\textwidth}
        \includegraphics[width=\textwidth]{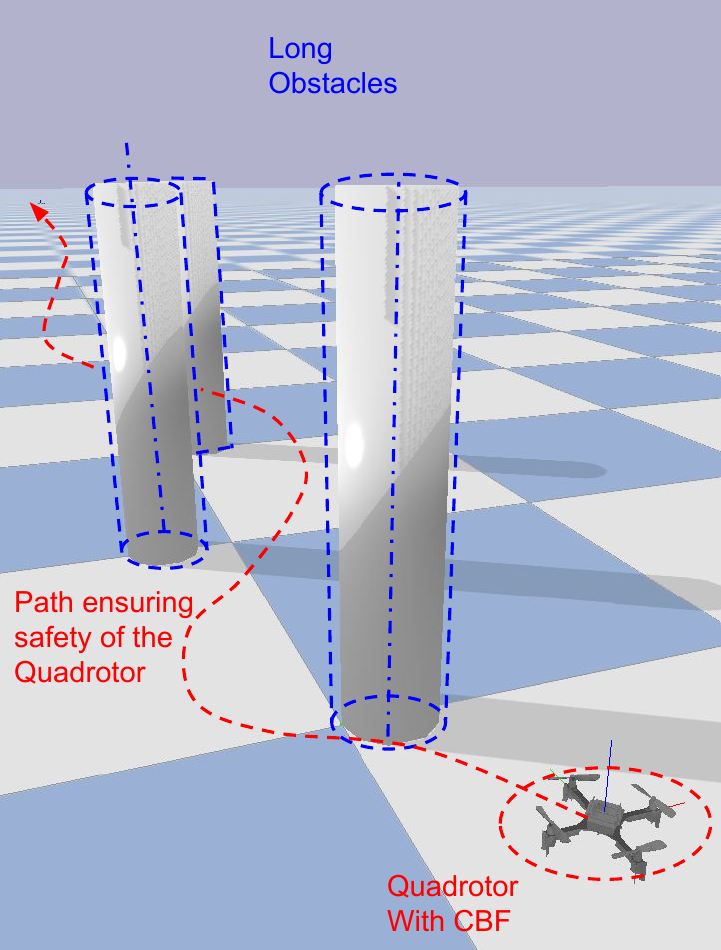}
        \caption{}
        \end{subfigure}
        \begin{subfigure}[b]{0.29\textwidth}
        \includegraphics[width=\textwidth]{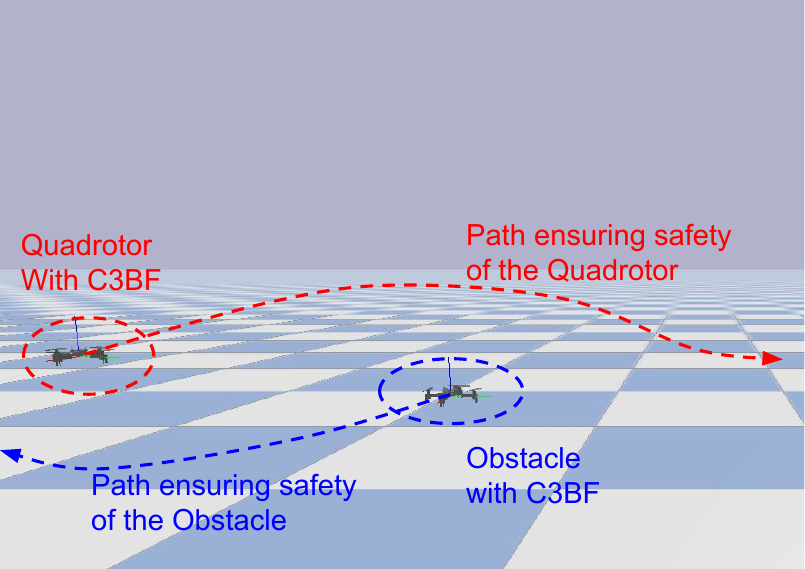}
        \caption{}
        \end{subfigure}
        \caption{Robustness in scenarios with multiple obstacles (a), (b) and with obstacle also following Collision Cone CBF(c).}
        \label{fig:robustness}
    \end{figure}

\section{Conclusions}
\label{section: Conclusions}
We presented the extension of a novel collision cone CBF formulation for quadrotors to avoid collision with static and moving obstacles of various shapes and sizes.
We successfully constructed CBF-QPs with the proposed CBF for the quadrotor model and guaranteed safety by avoiding moving obstacles. This includes collision avoidance with spherical and cylindrical obstacles.
We also showed that the 
current state-of-the-art Higher Order CBFs is more conservative and fails in certain scenarios (shown in the video). 
Finally, we demonstrated the robustness of the proposed CBF-QP controller for safe navigation in a cluttered environment consisting of multiple obstacles and agents with the same safety filters.
In our future work, we plan to implement the controller on quadrotors in real-world situations, interacting with a variety of obstacles. We also intend to explore applications such as the safe teleoperation of quadrotors. Additionally, we aim to investigate the potential of applying the C3BF formulation to legged robots walking in confined spaces. 

\label{section: References}
\bibliographystyle{IEEEtran}
\bibliography{references.bib}

\end{document}